\documentclass[conference]{IEEEtran}
\IEEEoverridecommandlockouts
\usepackage{amsmath,amssymb,amsfonts}
\usepackage{graphicx}
\usepackage{textcomp}
\usepackage{xcolor}

\usepackage{lipsum}  

\usepackage{algorithm}
\usepackage{algpseudocode}

\usepackage{amsthm}

\newtheorem{theorem}{Theorem}
\newtheorem{lemma}{Lemma}

\newtheorem{example}{Example}
\newtheorem{prop}{Proposition}

\newtheorem{definition}{Definition}
\newcommand{\A}{\mathcal{A}}
\newcommand{\X}{\mathcal{X}}
\newcommand{\Y}{\mathcal{Y}}

\newcommand{\HH}{\mathcal{H}}

\newcommand{\RR}{\mathbb{R}}

\newcommand{\TT}{\mathcal{T}}
\newcommand{\YY}{\mathcal{Y}}

\DeclareMathOperator{\opt}{opt}

\def\BibTeX{{\rm B\kern-.05em{\sc i\kern-.025em b}\kern-.08em
    T\kern-.1667em\lower.7ex\hbox{E}\kern-.125emX}}
\begin{document}

\title{ Finite Littlestone Dimension Implies\\ Finite Information Complexity
\\
\thanks{The work in this paper was supported in part by the Swiss National Science Foundation under Grant 200364.}
}

\author{\IEEEauthorblockN{Aditya Pradeep, Ido Nachum, and Michael Gastpar}
\IEEEauthorblockA{\textit{School of Computer and Communication Sciences} \\
\textit{EPFL, Lausanne, Switzerland}\\
\{aditya.pradeep, ido.nachum, michael.gastpar\}@epfl.ch}
}


\maketitle

\begin{abstract}
We prove that every online learnable class of functions of Littlestone dimension $d$ admits a learning algorithm with finite information complexity. Towards this end, we use the notion of a globally stable algorithm.  Generally, the information complexity of such a globally stable algorithm is large yet finite, roughly exponential in $d$. We also show there is room for improvement; for a canonical online learnable class, indicator functions of affine subspaces of dimension $d$, the information complexity  can be upper bounded logarithmically in $d$.

\end{abstract}

\begin{IEEEkeywords}
Littlestone dimension, Mutual information, PAC learning.
\end{IEEEkeywords}

\section{Introduction}

Machine learning and information theory tasks are closely related since both require identifying patterns and regularities in data. This is why it is natural to formally study the following quantity:
\begin{align}
I(S;\A(S) ) ,
\end{align}
that is, the mutual information between the binary labeled training set $S=\left( (x_1,y_1),...,(x_m,y_m) \right) \in \left( \mathcal{X} \times \mathcal{Y} \right)^m $
and the output of a learning algorithm $ \A(S) \in \YY ^\mathcal{X}$. This quantity measures how many bits  a learning algorithm retains from $S$ to generate a prediction function over $\X$. Additionally, this quantity can be viewed as a measure of privacy, that is, the number of bits from $S$ the learning algorithm reveals.

In this work, we focus on online learnable~\cite{LD}  classes of functions $\HH \subset \{0,1\}^\X$ under finite mutual information $I(S;\A(S) ) $ constraints.  In the online learning setting, we sequentially  receive instances $(x_1,x_2,...)$. After receiving an instance $x_i$, we try to predict the correct label. Then, the true label $y_i$ is revealed.  Our performance is measured by the number of mistakes we make over the sequence. The Littlestone dimension of a hypotheses class $\HH$ measures the maximal number of mistakes the optimal algorithm makes over an arbitrary sequence  $S=\left( (x_1,f(x_1)),(x_2,f(x_2)),...  \right) $. Formally, the Littlestone dimension is defined to be 
\begin{align}\label{eq:ld}
   \opt(\HH)=  \min_{\A} \max_{ S }  \sum_{i=0}^\infty |\A( S[1:i] )(x_{i+1})-f(x_{i+1})| ,
\end{align}
where the infimum is over all algorithms (or mappings) $\A:  ( \X \times \{0,1\} )^*  \rightarrow \{0,1\}^\X  $, the supremum is over all sequences $S=\left( (x_1,f(x_1)),(x_2,f(x_2)),...  \right) $ where $x_i \in \X$, $f\in \HH$, and  $S[1:i]=\left( (x_1,f(x_1)),...,(x_i,f(x_i)) \right) $.


A canonical example for a class of infinite Littlestone dimension is the class of thresholds 
$\TT=\left\{ 1_{x\geq a}~:~a\in \RR \right\} \subset \{0,1\}^
       {\RR}$. For this class, under the probably approximately correct (PAC) learning~\cite{PAC} setting,  \cite{bassily2018learners} shows that for all $r\in \RR$ and for any empirical risk minimization (ERM) algorithm  there exists a distribution over $\RR$ and  a function $f\in \TT$ such that $I(S;\A(S))>r$. More so, \cite{livni2020limitation} shows that for  any algorithm with a true error guarantee smaller than $1/4$ there exists a distribution  over $\RR$ and  a function $f\in \TT$ such that   $I(S;\A(S))>r$.
       
       The above shows that any class of infinite Littlestone dimension cannot be learned under finite mutual information constraints. This holds since any class of infinite Littlestone dimension contains an infinite class of threshold functions (see Fact 5.1 in~\cite{model}).  Our main results, Theorem \ref{thm:main} and Theorem \ref{theorem-two}, show that the converse holds true as well: any class of finite Littlestone dimension can be learned under finite mutual information constraints. 
       
       Altogether, we now have an equivalence between online learning (combinatorial definition) and PAC learning under finite mutual information constraints (probabilistic definition). This is reminiscent to the  fundamental  theorem of statistical learning~\cite{VC-PAC} which  also draws an equivalence between the  combinatorial definition of the VC-dimension~\cite{VCoriginal1971} and  the probabilistic definition of PAC learning. 
       
 The derivation of Theorem~\ref{thm:main} relies on the notion of globally stable algorithms. We prove that any class that admits a globally stable algorithm is also learnable with finite mutual information constraints. The notion of globally stable algorithms was introduced in~\cite{ld->dp} to prove that any  finite Littlestone dimension class is  PAC learnable with  $(\epsilon,\delta)$-differential privacy (for any $\epsilon, \delta > 0$). And in a similar flavor to our result,~\cite{dp->ld} completes the equivalence between finite Littlestone dimension and  PAC learnability with  $(\epsilon,\delta)$-differential privacy. We remark that there is no direct link between $(\epsilon,\delta)$-differential privacy and a bound over the mutual information. For example, see Lemma 2.6 in~\cite{de2012lower}, where  an  $(\epsilon, \delta)$-differential private algorithm is given with $\delta \sim 2^{-n}$, where $n$ is the sample size, and the mutual  information between input and output is roughly $n$, so it is unbounded.


In this work, we upper bound the information complexity by roughly ${2^d}$ in the Littlestone dimension $d$. Therefore, we also show there is room for improvement. We specifically study the class of indicator functions of affine subspaces defined in Example \ref{Example-one}.  This is the most canonical example for finite Littlestone dimension (for other examples see  Section 5.1 in~\cite{model}).


 For this class, we present an algorithm which has information complexity of roughly $\log_2(d)$. This leaves us with the open question of how small is the information complexity for a general finite Littlestone dimension class.

The rest of the paper is organized as follows.  In Section \ref{sec:pre}, we provide all the necessary and exact definitions we work with in the paper. In Sections \ref{sec:main} and \ref{sec:proofs}, we state and prove our main theorems. Finally, in Section \ref{sec:Hdl}, we present an improved information complexity bound for indicator functions of affine subspaces.






\section{Related Work}\label{sec:rel}
  The mutual information between the input training set and the output can be thought of as a stability parameter of the algorithm. \cite{bousquet2002stability} proposed various generalization error bounds for learning algorithms by defining various notions of stability. \cite{russo2019much} and \cite{xu2017information} first proposed upper bounds for generalization error using mutual information and these results have been generalized to other measures of dependence between input and output  in  \cite{bu2020tightening}, \cite{steinke2020reasoning}, \cite{esposito2021generalization} and \cite{banerjee2021information}.

\cite{ghazi2021sample} presented an algorithm for finite Littlestone classes with improved privacy parameters, from doubly exponential in $d$ to polynomial in $d$. Their insight might be used to improve the information complexity bound in this work.

\section{Preliminaries}\label{sec:pre}

\subsection{Information Complexity of Learning}

\subsubsection*{PAC Learning}

We use the standard notation of PAC-Learning \cite{UML}. A hypothesis $h$ is a function  $h:\X \rightarrow \Y$ where $\X$ is the input set and $\Y = \{0,1\}$ is the possible values the function takes. An example is a pair $(x,y) \in \X \times \Y$ and a sample $S$ is a finite sequence of examples. The \emph{empirical error}  of a hypothesis $h$ with respect to $S$ is defined by 
\begin{align}
    L_S(h) = \frac{1}{|S|}\sum_{ (x_i,y_i )\in S} \mathbf{1}[h(x_i) \neq y_i].
\end{align}

The \emph{true error} of a hypothesis $h$ with respect to a distribution $D$ over $\X \times \Y$ is defined by 
\begin{align}
    L_D(h) = Pr_{(x,y)\sim D} [h(x) \neq y].
\end{align}

Let $\mathcal{H} \subset \Y^\X $ be a hypothesis class. 
 $S$ is said to be \emph{realizable} by $\mathcal{H}$ if there exists $h \in\mathcal{H}$ such that $L_S(h)=0$. $D$ is \emph{realizable} by $\mathcal{H}$ if there exists $h \in\mathcal{H}$ such that $L_D(h)=0$. 
A learning algorithm   $\A: (\X \times \Y)^* \rightarrow \Y^\X $ is a mapping (possibly randomized) from input samples to output hypotheses, where $(\X \times \Y)^* = \cup_{m=1}^\infty \left( \mathcal{X} \times \mathcal{Y} \right)^m$ is the set of all finite samples. $\A(S)$ denotes the distribution over hypotheses induced by the algorithm when the input sample is $S$. 

We say an algorithm $\A$  \emph{PAC learns} a hypothesis class  $\mathcal{H}\subset \Y^\X$  if there exists  a function $m:(0,1)\times (0,1) \rightarrow  \mathbb{N}$ such that for all $0<\epsilon,\delta<1$, $h\in \HH$, and any realizable  distribution  $\X \sim D$ it holds with probability of at least $1-\delta$  (over the distribution $D$ and the randomness of $\A$),  $ L_D(\A(S)) <  \epsilon $ where $S$ is sampled i.i.d. from $D$ and $|S| \geq m(\epsilon,\delta)$.


\vspace{9pt}

\subsubsection*{Information Complexity}  \label{def:IC}

The information complexity of a hypothesis class $\HH$ is
\begin{align}
    IC(\HH) &= \sup_{|S|} \inf_{\A} \sup_{ D } I(S; \A(S)),
\end{align}
where the supremum is over all sample sizes $|S|\in \mathbb{N}$ and the infimum is over all  learning algorithms that PAC learn $\HH$.

\vspace{9pt}

Intuitively, the information complexity of a hypothesis class measures the maximal number of bits an optimum learning algorithm could reveal about the input sample.

We find it useful to define $S^k=(S_{1},S_2,\dots,S_k)$ are $k$ independent samples each of size $n$ (i.e)  $ S_i = \big( (x_{i1},y_{i1}),(x_{i1},y_{i1}),\dots,(x_{i n},y_{i n}) \big) $.


\subsection{Online Learning and the Littlestone dimension}\label{Sec-prelim-littlestone}

 \label{def:ls_dim}

An equivalent definition for the    \emph{Littlestone dimension} (compared to Equation~\eqref{eq:ld} arises from \textit{mistake trees}~\cite{LD} - binary decision trees where every node is an element of $\X$. A root-to-leaf path in a mistake tree is a sequence of examples $(x_1,y_1),\dots,(x_d,y_d)$ where $x_i$ is the $i$'th node in the binary tree and $y_i=1$ if the next $(i+1)$'th node is the right child of the $i$'th node and $y_i=0$ if it is the left node. 

A binary tree  is shattered by $\mathcal{H}$ if for every root-to-leaf path  $(x_1,y_1),\dots,(x_d,y_d)$ , $\exists h \in \mathcal{H}$ such that $h(x_i)=y_i$. The Littlestone dimension $d= opt(\mathcal{H}$) is the depth of the largest complete tree shattered by $\mathcal{H}$.

Intuitively, if a hypothesis class $\mathcal{H}$ has a Littlestone dimension $d$, then for any algorithm there exists a realizable sample $S$ such that the algorithm makes at least $d$ mistakes. Also, there exists an optimal algorithm that makes at most $d$ mistakes over any realizable sample. An example of such an algorithm is the \emph{standard optimal algorithm} ($SOA$)~\cite{LD}. 

\begin{example}[Indicator functions of affine subspaces]\label{Example-one}
We consider the class of indicator functions of affine subspaces of dimension at most $d$ in $\mathbb{R}^\ell$
\begin{align}
    \mathcal{H}_{d,l}= \{1_{w_1\cdot x=b_1}(x) \cdot... \cdot  1_{w_k\cdot x=b_k}(x) ~:\\ ~ w_i \in \RR^\ell, ~ b_i \in \RR, \nonumber
    ~k \leq \ell-d  \}
\end{align}
\end{example}

\subsection{An Online Learnable Class Admits a Globally Stable Algorithm}  

Theorem \ref{thm:main}  and Theorem \ref{theorem-two} utilize the notion of a globally stable algorithm and the fact that any finite Littlestone dimension class admits such an algorithm (Lemma~\ref{Lemma 1}).

\begin{definition}
An algorithm $\A$ is $(n,\eta)$-globally stable with respect to a distribution $D$ if there exists an hypothesis $h$ such that 
\begin{align}
    Pr_{S \sim D^n} [\A(S)=h] \geq \eta 
\end{align}
\end{definition}

\begin{lemma} \label{Lemma 1}
For a hypothesis class $\mathcal{H}$ with Littlestone dimension $d$, there exist an algorithm $G$ such that for all realizable distributions it holds that there exists a function $f$ for  $n=2^{2^{d+2} + 1}4^{d+1} \lceil {\frac{2^{d+2}}{\epsilon}} \rceil$, such that
$ Pr(G(S)=f) \geq  \eta=\frac{2^{-({2^d+1}) }}{d+1}$  and $L_D(f) < \epsilon$.
\end{lemma}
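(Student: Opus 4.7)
The plan is to invoke the globally stable algorithm construction of Bun, Livni, and Moran~\cite{ld->dp}. The central ingredient is the Standard Optimal Algorithm (SOA): for any class $\HH$ of Littlestone dimension $d$ and any realizable sequence, SOA makes at most $d$ mistakes.

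The first step is to convert SOA into a weak batch learner. Feeding an i.i.d.\ sample from $D$ to SOA one example at a time, at every point in the run the current SOA hypothesis is either $\epsilon$-accurate with respect to $D$, or else the next $O(1/\epsilon)$ examples will trigger a mistake (and hence an update) with high probability. Since SOA can update at most $d$ times, a sample of size roughly $d/\epsilon$ suffices to produce, with constant probability, a hypothesis of population error at most $\epsilon$.

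The second step amplifies this weak learner into a globally stable one. Draw $k$ independent sub-samples of the appropriate size, run SOA on each to obtain $h_1,\ldots,h_k$, and output the hypothesis occurring most frequently (breaking ties by a canonical rule, e.g., lexicographic order on some fixed encoding). The combinatorial heart of the argument shows that there exists a specific hypothesis $f$ that appears as the output with probability at least $\eta=2^{-(2^d+1)}/(d+1)$ and simultaneously has $L_D(f)<\epsilon$. This is proven by induction on $d$ together with a mistake-tree construction: if no hypothesis achieved this frequency, one could recursively locate two frequent output hypotheses that disagree on some instance realizable by $\HH$, place that instance as an internal node, and splice Littlestone trees from the two sub-problems to obtain a shattered tree of depth exceeding $d$, contradicting the Littlestone dimension bound.

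The main obstacle is precisely this amplification step. The naive pigeonhole bound on the maximum output frequency depends on the number of sub-samples $k$ and is not distribution-independent. Achieving the doubly-exponential-in-$d$ guarantee $\eta\geq 2^{-(2^d+1)}/(d+1)$ is what forces the inductive tree construction above. The sample size $n=2^{2^{d+2}+1}\cdot 4^{d+1}\cdot \lceil 2^{d+2}/\epsilon\rceil$ in the statement then absorbs both the number of sub-samples required by the frequency argument and the per-sub-sample size required by the weak learner at accuracy $\epsilon$, while a Chernoff-type concentration argument ensures the low-error event and the high-frequency event occur together with positive probability, yielding a single $f$ that satisfies both conclusions.
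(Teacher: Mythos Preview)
The paper does not prove this lemma; it simply cites Theorem~10 of Bun--Livni--Moran~\cite{ld->dp}. Your proposal attempts to reconstruct that result, but both the algorithm you describe and the argument you sketch are not theirs, and your version does not go through.

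The globally stable algorithm of~\cite{ld->dp} is \emph{not} ``run SOA on $k$ independent sub-samples and output the most frequent hypothesis.'' That is essentially the wrapper $A_G$ that the present paper builds \emph{around} an already globally stable $G$; by itself it does not manufacture global stability. The actual construction (sketched in Section~\ref{sec:Hdl} of this paper) is recursive: at level $i{+}1$ one runs two independent copies of the level-$i$ procedure on samples $T_1,T_2$, obtaining $f_1,f_2$; if $f_1=f_2$ one outputs $f_1$, and otherwise one finds a disagreement point $x$, appends $(x,b)$ with a uniformly random bit $b$ to one of the samples, and recurses. The analysis shows that for some level $i\le d$ the procedure is globally stable: either collisions are already frequent, or each recursion adds with probability $1/2$ a point on which SOA errs, and after $d$ correct coin flips SOA's mistake bound forces it to output the target. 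The stability parameter $\eta=2^{-(2^d+1)}/(d{+}1)$ comes from at most $2^d$ coin flips and a choice among the $d{+}1$ levels, and $n$ absorbs the recursive sample blow-up.

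Your contradiction argument (``locate two frequent output hypotheses that disagree \ldots\ splice Littlestone trees to get a shattered tree of depth exceeding $d$'') fails on two counts. First, the premise is that \emph{no} hypothesis is frequent, so you cannot simultaneously exhibit two frequent ones. Second, two SOA outputs disagreeing on some $x$ does not certify that both labels of $x$ are realizable by $\HH$, which is precisely what an internal node of a shattered mistake tree requires; so you cannot splice such trees. The Bun--Livni--Moran proof does not build a shattered tree at all; it uses the SOA mistake bound directly.
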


\noindent This follows from Theorem 10 in \cite{ld->dp}. They present an algorithm $G$ which is $(n,\eta)$-globally stable with respect to any realizable distribution.

\section{Finite Littlestone Dimension Implies Finite Information Complexity}\label{sec:main} 
The main result of this work shows that the mutual information between the training set $S$ and the output of a particular algorithm $A_G$ is finite.

Algorithm $A_G$ requires a possibly randomized $(n,\eta)$-globally stable  algorithm $G$  over distribution $\mathcal{D}$. This algorithm $G$ is run $k$ times on independent samples $S_1, S_2,\dots, S_k$. where each $S_i$ has $n$ samples each drawn i.i.d. from the distribution $D^n$. The outputs of these $k$ runs would then be $g_1,g_2,\dots g_k$ where $g_i=G(S_i)$. The Algorithm $A_G$ then outputs the most frequent function $g_{maj}$ if it occurs at least $\eta k/2\geq 2$ times. If no function occurs more than $\eta k/2$ times, then the algorithm returns Failure. To track the frequency of occurrence of each of the $k$ outputs of Algorithm $G$, we define the frequency function $C_k(f)=| \{ i :  g_i=f \} |$.\

\begin{algorithm}
\begin{algorithmic}
\caption{Algorithm $A_G$  }\label{alg:alg1} 
\Require $(n,\eta)$-globally stable algorithm $G$ 
\State $i \gets 1$
\State $j \gets 1$
\While{$i \leq k$} 
  \State $S_i = \big( (x_{i1},y_{i1}),(x_{i1},y_{i1}),\dots,(x_{i n},y_{i n}) \big) $
  \State Run $G$ on $S_i$ and generate function $g_i$  
  \State $i \gets i+1$
\EndWhile

\While{$j \leq k$} 
\State $c_j \gets C_k(g_j)$
  \State $j \gets j+1$
\EndWhile

\If{$\max_{1\leq i \leq k}(c_i) \geq \frac{\eta k}{2}$}
 \State Output $g_{maj} : C_k(g_{maj})=\max_{1\leq i \leq k}(c_i)$ 
 \Else{ Output Failure}
\EndIf
\end{algorithmic}

\end{algorithm}

\begin{theorem}\label{thm:main}  

Let $G$  be a $(n,\eta)$-globally stable algorithm with respect to a distribution $\X \times \Y \sim D$. Then, $A_G$ satisfies the following:

\begin{enumerate}
    \item Algorithm $A_G$   satisfies
\begin{align}
    I(S^k;A_G(S^k))  \leq 2^{3+\log_2(k)-\frac{\eta k}{2}-k\log_2(1-\frac{\eta}{2})}& \nonumber\\
    + \log_2(\frac{4}{\eta}) +\frac{3}{e\ln(2)}&.
\end{align}
    \item Algorithm $A_G$ outputs a  function with probability of at least $1 - \delta $ where $\delta < e^{-k\eta^2/2}$.
\end{enumerate}
\end{theorem}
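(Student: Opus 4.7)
The theorem bundles two separate claims, and I would tackle them independently. The easier failure-probability bound (item 2) is a direct Chernoff calculation: global stability of $G$ yields a fixed hypothesis $h$ (depending only on $D$, not on $S^k$) with $\Pr[g_i = h] \ge \eta$ for every $i$. Since $S_1,\dots,S_k$ are i.i.d., the count $C_k(h)$ stochastically dominates a $\text{Binomial}(k,\eta)$ variable, and a one-sided Hoeffding inequality on its lower tail gives $\Pr[C_k(h) < \eta k/2] \le e^{-k\eta^2/2}$. On the complementary event $\max_j c_j \ge C_k(h) \ge \eta k/2$, so $A_G$ outputs a function rather than Failure, exactly matching the asserted $\delta$-bound.

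The mutual-information bound (item 1) carries the substantive content. I would start from $I(S^k; A_G(S^k)) \le H(A_G(S^k))$ and decompose the Shannon sum $H(Y) = -\sum_y \Pr[Y=y]\log_2 \Pr[Y=y]$, with $Y = A_G(S^k)$, by partitioning candidate outputs according to the atomic mass $\nu(f) := \Pr[G(S_i)=f]$. Heavy functions satisfying $\nu(f) \ge \eta/2$ are at most $2/\eta$ in number, so including the Failure symbol there are at most $4/\eta$ "heavy" outputs, and their joint entropy contribution is at most $\log_2(4/\eta)$ by the log-cardinality bound. The distinguished outputs $Y=h$, $Y=\text{Failure}$, and any single extremal term each contribute at most $1/(e\ln 2)$ through the universal bound $-p\log_2 p \le 1/(e\ln 2)$, accounting for the $3/(e\ln 2)$ summand. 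Finally, for light functions $f$ with $\nu(f) < \eta/2$, the event $Y=f$ forces the rare event $C_k(f) \ge \eta k/2$, which a Chernoff/moment-generating-function bound controls by a quantity of the form $\binom{k}{\eta k/2}(\eta/2)^{\eta k/2}(1-\eta/2)^{k-\eta k/2}$; plugging this tail into $-p\log_2 p$ and summing over the at most $k$ distinct functions appearing among $g_1,\dots,g_k$ in any realization produces the first exponential term of the stated bound.

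The main obstacle is controlling the contribution from the light atypical outputs, because the support of $Y$ can a priori contain unboundedly many functions, so a naive termwise sum of $-p_f\log_2 p_f$ is hopeless. I would circumvent this with a realization-wise pigeonhole: in any single realization at most $2/\eta$ distinct functions can attain count $\ge \eta k/2$ among $k$ samples, so after conditioning on the atypical event the conditional support of $Y$ has size at most $2/\eta$. Bounding the conditional entropy by $\log_2(2/\eta)$ and multiplying by the (exponentially small) Chernoff probability of the atypical event delivers the desired exponential decay, while the independence of the $S_i$ ensures the tail estimates can be applied term by term. Matching the exact numerical constants $2^3$, $\log_2(4/\eta)$, and $3/(e\ln 2)$ in the stated bound is then a matter of careful but routine bookkeeping.
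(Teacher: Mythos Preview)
Your treatment of item~2 and the overall scaffold of item~1 (bound $I$ by $H(A_G(S^k))$, split into Failure, a ``heavy'' class $\{\nu(f)\ge \eta/4\}$ of cardinality at most $4/\eta$, and a ``light'' remainder) match the paper, and the accounting for the $\log_2(4/\eta)$ and the three $1/(e\ln 2)$ corrections is essentially right.

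The gap is in your handling of the light class. The step ``after conditioning on the atypical event the conditional support of $Y$ has size at most $2/\eta$'' is false: in any \emph{fixed} realization at most $2/\eta$ functions can reach count $\eta k/2$, but the event $E=\{Y\text{ is light}\}$ aggregates over all realizations, and across realizations arbitrarily many different light functions can be the output. So $H(Y\mid E)$ is \emph{not} bounded by $\log_2(2/\eta)$, and your product $\Pr(E)\cdot\log_2(2/\eta)$ does not control $\sum_{f\text{ light}}(-P_f\log_2 P_f)$. Likewise, ``summing over the at most $k$ distinct functions appearing in any realization'' conflates a per-realization count with the support of the marginal law of $Y$, which is what the entropy sum ranges over. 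A single uniform Chernoff bound $P_f\le q$ for all light $f$ cannot close the argument, because there may be infinitely many light atoms and $\sum_f(-P_f\log_2 P_f)$ is not controlled by $q$ alone.

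The paper's fix is a dyadic refinement of the light class: set $\mathcal{F}_{1,\lambda}=\{f:\eta/2^{\lambda+1}<\nu(f)\le \eta/2^{\lambda}\}$ for $\lambda\ge 2$. Each shell has cardinality $|\mathcal{F}_{1,\lambda}|\le 2^{\lambda+1}/\eta$ (total mass $\le 1$), and the Chernoff--Hoeffding relative-entropy form gives, for $f\in\mathcal{F}_{1,\lambda}$,
\[
P_f\le e^{-kD_{KL}(\eta/2\,\|\,\nu(f))}\le 2^{-\frac{\eta k}{2}(\lambda-1)}\bigl(1-\tfrac{\eta}{2}\bigr)^{-k}.
\]
Now $|\mathcal{F}_{1,\lambda}|\cdot(-P_f\log_2 P_f)$ is summable in $\lambda$ as a geometric-type series, and the sum collapses to the $2^{3+\log_2 k-\eta k/2-k\log_2(1-\eta/2)}$ term. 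The crucial point you are missing is that the Chernoff bound must be \emph{graded} in $\nu(f)$ so that the decay in $P_f$ beats the growth of the shell cardinality; a single uniform tail bound plus a pigeonhole on realizations cannot replace this.
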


The proof of this theorem is given in Section~\ref{sec-proof-thm:main}.

\begin{theorem}\label{theorem-two}
The information complexity of every class $\mathcal{H}$ of  Littlestone dimension $d$ is bounded by 
\begin{align}
    IC(\mathcal{H}) \leq  2^d+\log_2(d+1)+3+\frac{3}{e\ln(2)}. \label{eq:thm2}
\end{align}
\end{theorem}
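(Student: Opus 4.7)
The plan is to instantiate Theorem~\ref{thm:main} with the globally stable algorithm $G$ guaranteed by Lemma~\ref{Lemma 1}. Given $\HH$ of Littlestone dimension $d$, Lemma~\ref{Lemma 1} provides, for every target accuracy $\epsilon > 0$, a $(n,\eta)$-globally stable algorithm $G$ with $\eta = 2^{-(2^d+1)}/(d+1)$ and a distinguished hypothesis $f$ with $L_D(f) < \epsilon$ that occurs with probability at least $\eta$. Feeding this $G$ into Theorem~\ref{thm:main} yields an algorithm $A_G$ satisfying, for every number of repetitions $k$, both the mutual-information bound of Theorem~\ref{thm:main}(1) and the success-probability bound of Theorem~\ref{thm:main}(2). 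The crucial calculation is that $\log_2(4/\eta) = 2 + (2^d+1) + \log_2(d+1) = 2^d + 3 + \log_2(d+1)$, which already reproduces the leading terms of the target bound once the residual $3/(e\ln 2)$ is added.

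It remains to pick $k$ so that (a) the first summand of the mutual-information bound, namely $2^{3+\log_2(k)-\eta k/2 - k\log_2(1-\eta/2)}$, is negligible, and (b) the failure probability $e^{-k\eta^2/2}$ is at most a prescribed $\delta$. I would then redefine $A_G$ to emit a fixed default hypothesis on failure; this makes $A_G$ a bona fide PAC-learner with parameters $(\epsilon, \delta)$ and cannot increase $I(S;A_G(S))$ by the data-processing inequality. Because Theorem~\ref{thm:main} bounds the mutual information uniformly once $k$ is sufficiently large, and because any training-set size $|S|$ can be realized as $kn$ for appropriate $k$ and $n$, the outer supremum in the definition of $IC(\HH)$ is dominated by the same bound, yielding $IC(\HH) \leq 2^d + \log_2(d+1) + 3 + 3/(e\ln 2)$.

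The main obstacle, as I see it, is verifying that $A_G$ genuinely PAC-learns $\HH$: when the majority output $g_{maj}$ clears the threshold $\eta k/2$, one must argue that $g_{maj}$ is, with high probability, one of the globally stable modes of $G$ controlled by Lemma~\ref{Lemma 1}, rather than some rare output that happened to repeat by chance. A concentration/pigeonhole argument --- similar in spirit to Theorem~\ref{thm:main}(2) --- should show that any $g$ meeting the empirical-frequency threshold satisfies $\Pr_{S_i}[G(S_i)=g] \geq \eta/4$, after which the global-stability guarantee of Lemma~\ref{Lemma 1} combined with realizability bounds $L_D(g_{maj})$ below $\epsilon$ with probability at least $1-\delta$.
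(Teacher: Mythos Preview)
Your proposal is correct and follows essentially the same route as the paper: instantiate Theorem~\ref{thm:main} with the $G$ of Lemma~\ref{Lemma 1}, note that $\log_2(4/\eta)=2^d+\log_2(d+1)+3$, and let $k$ grow so that the first summand vanishes while the failure probability drops below any prescribed $\delta$. The paper packages the PAC-learning verification you flag as the ``main obstacle'' into two short lemmas (Lemma~\ref{Lemma 2} and Lemma~\ref{Lemma 3}) that do precisely the concentration/pigeonhole step you anticipate.

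One clarification worth making: the bound $L_D(g_{\mathrm{maj}})<2\epsilon$ does \emph{not} come from the global-stability clause of Lemma~\ref{Lemma 1}, which only controls the error of one distinguished hypothesis $f$. The majority output need not be that $f$; it is merely some $g$ with $\Pr[G(S)=g]\ge\eta/4$. The paper closes this gap in Lemma~\ref{Lemma 2} by using that the underlying $G$ is \emph{consistent} on a prefix of $n_1$ samples (it is built from the SOA), so any hypothesis $g$ that $G$ outputs with probability at least $\eta/4$ must satisfy $L_D(g)\le \log_2(4/\eta)/n_1$. Your phrase ``combined with realizability'' is in the right spirit, but the operative property is consistency of $G$, not the existence statement in Lemma~\ref{Lemma 1}.
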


This theorem shows that finite Littlestone dimension implies finite information complexity. The proof of this theorem is given in Section~\ref{sec-proof-theorem-two}.
Specifically, we leverage Lemma \ref{Lemma 1} that there exists a globally stable algorithm which exists for any hypothesis class of finite Littlestone dimension. Using this and Theorem \ref{thm:main}, the above result follows. 

\section{Proofs}\label{sec:proofs}
\subsection{Proof of Theorem \ref{thm:main}}\label{sec-proof-thm:main}

\begin{proof}

We show that the mutual information between the input and output of Algorithm $A_G$ is finite. Let $\mathcal{F}$ be the set of possible outputs of Algorithm $A_G$. We partition this set $\mathcal{F}$ into the the following 3 sets: 
\begin{enumerate}

    \item  Failure.
    \item $\mathcal{F}_1$: the set of functions $f_1 \in \mathcal{F}_1$ such that
    \begin{align}
   Pr (G(S)=f_1) \leq \eta/4 . \label{eq:F1}
    \end{align}
    \item $\mathcal{F}_2$: the set of  functions  $f_2 \in \mathcal{F}_2$ such that
    \begin{align}
    \eta/4 < Pr(G(S)=f_2). \label{eq:F2}
    \end{align}
\end{enumerate}

In our proof, we find it convenient to use the random variable $C_k(f)$ 
which counts  how many times  Algorithm $G$ outputs the function $f$ after $k$ runs.
Note that this random variable is induced by the randomness in the selection of the set $S$ as well as  the randomness in the algorithm $G(\cdot).$ We also define $P_f =  Pr(A_G(S^k)=f )$. \\

The total mutual information is bounded by the entropy of the output of the algorithm and this is calculated by simply expanding the definition of the entropy as follows
\begin{align}
    &I(S^k;A_G(S^k)) \leq  H(A_G(S^k)) = \sum_{f \in \mathcal{F}} \left(- P_f \log_2(P_f)\right)\\ 
    &= \underbrace{- Pr(\text{Failure}) \log_2(Pr(\text{Failure}))}_{=H_{failure}} + \nonumber \\& \,\,\,\,\,\,\,\,\,\,\,\, + \underbrace{\sum_{f \in \mathcal{F}_1} \left(- P_f \log_2(P_f)\right)}_{=H_1} +  \underbrace{\sum_{f \in \mathcal{F}_2} \left(- P_f \log_2(P_f)\right)}_{=H_2} 
\end{align}

We bound each of the 3 terms in the  entropy calculation.\\

\textit{Probability of Failure.} 
The probability of failure of Algorithm $A_G$ is the probability with which no function occurs more than $\eta k /2$ times in the $k$ runs of algorithm $G$. Note that $G$ is a $(n,\eta)$-globally stable algorithm for the distribution $\mathcal{D}$ and thus there exists a function $f_0$ which $G$ outputs with probability of at least $\eta$.
\begin{align}
    Pr(\text{Failure}) &=Pr( C_{k}(f_0) < \eta k/2) \\
    &\leq Pr( C_{k}(f_0) \leq (\eta - \eta/2)k) \leq e^{-k\eta^2/2}.\label{eq:failure}
\end{align} 
The last line follows from Hoeffding's inequality \cite{hoeffding1994probability}. Also,
\begin{align}
    H_{failure} =-Pr(\text{Failure})\log_2 Pr(\text{Failure}) \leq \frac{1}{eln(2)}. \label{eq:Hfailure}
\end{align} 

since $H_{failure}$ achieves maxima at $Pr(\text{Failure})= \frac{1}{e}$.\\

\noindent \textit{Upper bound  of $H_1$.}  
To bound $H_1$, we consider it useful to split $\mathcal{F}_{1}$  into the following subsets $\mathcal{F}_{1 ,\lambda}$ with $\lambda =2,3,\dots$
\begin{align}
 \mathcal{F}_{1, \lambda}  &= \left\{f : \frac{\eta}{2^{\lambda+1}} < Pr(G(S) = f)\leq \frac{\eta}{2^{\lambda} }\right\}  \label{eq:case2fn}
\end{align}
Note that by definition in Equation~\eqref{eq:case2fn},  $ |\mathcal{F}_{1,\lambda}| \leq \frac{2^{\lambda+1} }{\eta}$ as the total probability is bounded by 1. Let us bound the probability $P_f$ for $f \in \mathcal{F}_{1, \lambda}$. Define $q_f=Pr(G(S)=f)$. We use the following notion of KL divergence $D_{KL}(P || Q) = \sum_{x \in \mathcal{X}}P(x)\ln(P(x)/Q(x))$
\begin{align}
    P_f &= Pr (C_k(f) \geq \eta k /2 )  \\
    & = Pr(C_k(f) \geq k( q_f + \eta/2 -q_f) ) \\
    &\leq  e^{-k D_{KL}( \eta/2|| q_f)} \label{eq:eqn_usechernoff} \\
    &= e^{-\frac{k \eta}{2}\ln(\frac{\eta}{2q_f})} e^{-k(1-\frac{ \eta}{2})\ln(\frac{1-\eta/2}{1-q_f})}  \\
    &\leq 2^{-\frac{\eta k}{2}(\lambda-1)}(1-\small{\frac{\eta}{2}})^{-k}   \label{eq:eqchernoff2}
\end{align}
Equation~\eqref{eq:eqn_usechernoff} follows from Chernoff-Hoeffding theorem (Equation 2.1 in \cite{hoeffding1994probability}) and since $\eta/2^{\lambda+1} < q_f \leq \eta/2^{\lambda}$.
\begin{align}
    H_1 &= \sum_{f \in \mathcal{F}_1} -P_f \log_2(P_f) \\
    &= \sum_{\lambda=2}^\infty \sum_{f \in \mathcal{F}_{1,\lambda}}  -P_f \log_2(P_f) \\
    & \leq  \sum_{\lambda=2}^\infty  |\mathcal{F}_{1,\lambda}| ( -P_f \log_2(P_f)) \leq  \sum_{\lambda=2}^\infty \frac{2^{\lambda+1} }{\eta}( -P_f \log_2(P_f) ) \\
    &\leq \frac{2}{e\ln(2)} + \sum_{\lambda=2}^\infty   \frac{2^{\lambda+1} }{\eta} 2^{-\frac{\eta k}{2}(\lambda-1)}(1-\small{\frac{\eta}{2}})^{-k} \frac{\eta k (\lambda -1 ) }{2} \label{eq:equation 2e}\\
    &=  \frac{2}{e\ln(2)} + 2k \sum_{\lambda=1}^\infty \lambda r^\lambda(1-\small{\frac{\eta}{2}})^{-k}\\
    &=  \frac{2}{e\ln(2)} + \frac{2kr(1-\small{\frac{\eta}{2}})^{-k}}{1-r^2}. \label{eq:H-one}
\end{align}
where $r=2^{1-\frac{\eta k}{2}}$. 
Equation~\eqref{eq:equation 2e} follows from the upper bound of $P_f$ in Equation~\eqref{eq:eqchernoff2}. However, since the function $H(x)=-x\log_2(x)$ is a concave function, we cannot normally directly substitute the upper bound for $P_f$. However we can do so if $P_f \leq \frac{1}{e}$ in Equation~\eqref{eq:equation 2e} since $H(x)$ is an increasing function for $0 \leq x \leq \frac{1}{e}$ and decreasing for $ \frac{1}{e} \leq x \leq 1$ . There are at most 2 functions $f$ which have $P_f>\frac{1}{e}$ and hence the additive correction term in Equation~\eqref{eq:equation 2e}.\\

\noindent \textit{Upper bound of $H_2$.} 
\noindent For all functions $f_2 \in \mathcal{F}_2$, we have
\begin{align}
    \eta/4 < Pr [G(S)=f_2] .
\end{align}
\noindent  Since the total probability cannot exceed 1, $|\mathcal{F}_2| < \frac{4}{\eta}$. Hence,  
\begin{align}
H_2 \leq \log_2(|\mathcal{F}_2|) \leq\log_2( \frac{4}{\eta} ). \label{eq:H-two}
\end{align}
\noindent  Thus, the mutual information $I(S^k;A_G(S^k))$ is bounded by 
\begin{align}\
  H_{failure} + H_1 + H_2 \leq \frac{3}{e\ln(2)}+ \frac{2kr(1-\small{\frac{\eta}{2}})^{-k}}{1-r^2} + \log_2( \frac{4}{\eta} ) &\\
   \leq \frac{3}{e\ln(2)}+ 4kr(1-\small{\frac{\eta}{2}})^{-k} + \log_2( \frac{4}{\eta} ) \label{eq:MIterms}&\\
  = 2^{3+\log_2 k -\frac{\eta k}{2}-k\log_2(1-\frac{\eta}{2})} + \log_2( \frac{4}{\eta}) +\frac{3}{e\ln(2)}.&
\end{align}
Equation~\eqref{eq:MIterms} follows since $\frac{\eta k}{2} \geq 2$ and $r=2^{1-\frac{\eta k}{2}} \leq \frac{1}{2} \implies 1-r^2 > 1/2$.
\end{proof}
\textbf{Remark.} While the output of Algorithm G may be a mixture over a discrete and continuous distribution, Algorithm $A_G$ always outputs functions over a discrete distribution. To see why, observe that the event where any function from the continuous support  appears more than once when running Algorithm $A_G$ happens with probability zero. This line of reasoning  makes the entropy of the output of $A_G$ well defined.

\subsection{Proof of Theorem \ref{theorem-two}}\label{sec-proof-theorem-two}

To prove Theorem \ref{theorem-two}, we prove the following lemmas.

\begin{lemma} \label{Lemma 2}
Let $G$ be a learning algorithm that satisfies $G(S)=G( S[1:n_1] )$ for all $S$ and for any $f$ such that $Pr(G(S)=f) \geq \eta$ and $L_{ S[1:n_1] }( f )=0$ for all $S$ such that $G(S)=f$. Then,  
\begin{align}
     L_D(f)  \leq \frac{\log_2(\frac{1}{\eta})}{n_1}.
 \end{align}
 \end{lemma}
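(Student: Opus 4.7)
The plan is a one-line classical realizability argument. Observe that since $G(S)=G(S[1:n_1])$, the event $\{G(S)=f\}$ is measurable with respect to the first $n_1$ examples only, so I can treat the remaining samples as irrelevant randomness and work purely with an i.i.d.\ sample of size $n_1$ drawn from $D$. The second hypothesis then says: every realization of $S[1:n_1]$ that causes $G$ to output $f$ also forces $f$ to be empirically consistent, i.e.\ $L_{S[1:n_1]}(f)=0$. In other words, the event $\{G(S)=f\}$ is contained in the event $\{f\text{ is consistent with }S[1:n_1]\}$.

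From here the proof reduces to the standard consistency bound. Since samples are i.i.d., the probability that $f$ is consistent with $n_1$ fresh examples equals $(1-L_D(f))^{n_1}$. Combining containment of events with the assumed lower bound $\Pr(G(S)=f)\ge \eta$ gives
\begin{align}
\eta \;\le\; \Pr(G(S)=f) \;\le\; \Pr\bigl(L_{S[1:n_1]}(f)=0\bigr) \;=\; (1-L_D(f))^{n_1}.
\end{align}
To get the $\log_2$ form stated in the lemma, I will use the elementary inequality $1-x \le 2^{-x}$ valid for $x\in[0,1]$ (both sides equal $1$ at $x=0$, and at $x=1$ we have $0\le \tfrac{1}{2}$; convexity of $2^{-x}$ against the linear $1-x$ pins this down on $[0,1]$). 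Applying it with $x=L_D(f)$ and taking the $n_1$-th power yields $\eta \le 2^{-n_1 L_D(f)}$, and taking $\log_2$ of both sides and dividing by $n_1$ gives exactly
\begin{align}
L_D(f) \;\le\; \frac{\log_2(1/\eta)}{n_1}.
\end{align}

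There is essentially no obstacle: the only subtlety is making sure that the hypothesis "$L_{S[1:n_1]}(f)=0$ for all $S$ such that $G(S)=f$" is used correctly to convert an output-probability bound into a probability about empirical consistency, and that one picks the tight constant $1-x\le 2^{-x}$ (rather than $1-x\le e^{-x}$, which would produce a natural logarithm and be off by a factor of $\ln 2$). No extra notation beyond what the paper already introduces is needed, and the argument does not depend on the specifics of $\mathcal{H}$ or its Littlestone dimension.
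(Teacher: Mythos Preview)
Your proposal is correct and follows essentially the same argument as the paper: both define the event $E_1=\{G(S)=f\}$, contain it in the consistency event $E_2=\{L_{S[1:n_1]}(f)=0\}$, evaluate $\Pr(E_2)=(1-L_D(f))^{n_1}$, and finish with $1-x\le 2^{-x}$. Your write-up is in fact slightly more careful in justifying the event containment and the choice of the base-$2$ inequality.
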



\begin{proof}

Let $L_D(f) = \epsilon $ and $E_1$ be the event that the algorithm $G$ outputs $f$. By definition, $Pr(E_1) \geq \eta$. Let $E_2$ be the event that algorithm $G$ is consistent\footnote{A consistent learner $\A$ for a hypothesis class $\mathcal{H}$ satisfies $L_S(A(S))=0$ for every input sample $S$.} with the first $n_1$ samples of the input. $Pr(E_2) \leq (1-\epsilon )^{n_1}$. As $G$ is  consistent over the first $n_1$ samples, $E_1 \subseteq E_2$ and hence $\eta \leq (1-\epsilon )^{n_1}\leq 2^{-n_1\epsilon }$.
\end{proof}


\begin{lemma} \label{Lemma 3}
$A_G$  outputs function $f$ such that $ Pr [G(S)=f] > \eta/4  $ with probability at least $1-\delta$. Hence, $A_G$ PAC learns $\mathcal{H}$ with $m( 2 \epsilon, \delta) =2^{2^{d+3}}4^{d+1}(d+1) \lceil {\frac{2^{d+2}}{\epsilon}} \rceil \max(4\ln(1/\delta) ,10 )$.
\end{lemma}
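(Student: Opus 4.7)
The lemma bundles two claims: a probabilistic guarantee that $A_G$ outputs a \emph{heavy} function $f$ (with $q_f := Pr[G(S)=f] > \eta/4$) with probability at least $1-\delta$, and a PAC-learning guarantee with error $2\epsilon$ at total sample size $nk$. I will tackle them in that order.

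For the probabilistic claim, I decompose the bad event as
\[
 Pr[A_G \notin \mathcal{F}_2] \;\leq\; Pr[\text{Failure}] + Pr[A_G \in \mathcal{F}_{bad}],
\]
where $\mathcal{F}_{bad} = \{f : q_f \leq \eta/4\}$. The failure probability is already bounded by $e^{-k\eta^2/2}$ via part~(2) of Theorem~\ref{thm:main}, so choosing $k\geq (2/\eta^2)\ln(2/\delta)$ makes this at most $\delta/2$. The second term is more delicate: $\mathcal{F}_{bad}$ may be uncountable (for instance if $G$ has a continuous component in its output distribution), so the naive union bound $\sum_{f\in\mathcal{F}_{bad}} Pr[C_k(f)\geq \eta k/2]$ need not even converge. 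The trick is to first condition on which of the $k$ runs produced the offending output: since $A_G=f$ requires $g_j=f$ for some $j$, a union bound over $j\in\{1,\ldots,k\}$, combined with independence of the $k$ runs, gives
\[
 Pr[A_G = f] \;\leq\; \sum_{j=1}^{k} Pr[g_j = f,\,C_k(f)\geq \eta k/2] \;=\; k\,q_f \cdot Pr\bigl[\mathrm{Bin}(k-1,q_f) \geq \tfrac{\eta k}{2}-1\bigr].
\]
Hoeffding's inequality bounds the inner probability by $e^{-(k-1)\eta^2/8}$ uniformly over $q_f\leq \eta/4$ (since the gap between the threshold $\approx \eta/2$ and the mean $q_f$ is at least $\eta/4$), and summing over $f\in\mathcal{F}_{bad}$ now collapses via $\sum_f q_f\leq 1$ to yield $Pr[A_G\in\mathcal{F}_{bad}] \leq k\,e^{-(k-1)\eta^2/8}$. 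Taking $k$ of order $(1/\eta^2)\ln(1/\delta)$ --- the origin of the factor $\max(4\ln(1/\delta),10)$ --- makes this at most $\delta/2$.

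For the PAC guarantee, I apply Lemma~\ref{Lemma 2} with the parameter $\eta$ replaced by $\eta/4$: on the event that $A_G$ outputs some $f$ with $q_f>\eta/4$, and using that the algorithm $G$ of Lemma~\ref{Lemma 1} is a consistent learner on its $n$-sample input, the hypotheses of Lemma~\ref{Lemma 2} are satisfied and we obtain $L_D(f) \leq \log_2(4/\eta)/n$. With $\eta = 2^{-(2^d+1)}/(d+1)$ this evaluates to $(2^d + 3 + \log_2(d+1))/n$, which is $\leq 2\epsilon$ by the choice of $n$ in Lemma~\ref{Lemma 1} with room to spare. The total sample size is $n\cdot k$; multiplying $n=2^{2^{d+2}+1}4^{d+1}\lceil 2^{d+2}/\epsilon\rceil$ by the chosen $k=2^{2^{d+2}-1}(d+1)\max(4\ln(1/\delta),10)$ gives exactly the stated $m(2\epsilon,\delta)$.

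The principal obstacle is the potentially infinite set $\mathcal{F}_{bad}$: a plain union bound over $f$ cannot work. The key observation that saves the argument is that by first conditioning on which of the $k$ runs of $G$ produced the output, one gains a factor of $q_f$ in each summand, turning an a~priori divergent sum into one controlled by the total mass $\sum_f q_f \leq 1$.
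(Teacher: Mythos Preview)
Your probabilistic argument is correct but takes a genuinely different route from the paper. The paper bounds $\sum_{f\in\mathcal{F}_1}P_f$ by reusing the dyadic partition $\mathcal{F}_{1,\lambda}=\{f:\eta/2^{\lambda+1}<q_f\le\eta/2^\lambda\}$ and the Chernoff--Hoeffding estimate from the proof of Theorem~\ref{thm:main}, summing the resulting geometric series in $\lambda$. Your device---conditioning on which of the $k$ runs produced the output to extract a factor of $q_f$, then collapsing the sum via $\sum_f q_f\le 1$---is more self-contained and sidesteps the dyadic bookkeeping entirely; it also handles the continuous component of $G$'s output distribution directly, whereas the paper relies on the separate remark that such outputs almost surely never repeat. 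The paper's approach, on the other hand, recycles work already done for the entropy bound and yields a tail of order $2^{-\eta k/2}$ rather than your $e^{-\Theta(k\eta^2)}$, so it gets away with $k$ of order $1/\eta$ rather than $1/\eta^2$ (your claim that the stated $\max(4\ln(1/\delta),10)$ corresponds to $k\sim(1/\eta^2)\ln(1/\delta)$ does not match the paper's setting $k=\max(4\ln(1/\delta),10)/\eta$).

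There is one genuine slip in your PAC step. You invoke Lemma~\ref{Lemma 2} with the full sample size $n$, asserting that $G$ is ``a consistent learner on its $n$-sample input.'' That is not what the construction guarantees: the globally stable algorithm of Lemma~\ref{Lemma 1} (from \cite{ld->dp}) receives $n$ samples but only \emph{uses}, and is consistent on, the first $n_1=\lceil 2^{d+2}/\epsilon\rceil$ of them---exactly the hypothesis $G(S)=G(S[1{:}n_1])$ in Lemma~\ref{Lemma 2}. The paper therefore obtains $L_D(f)\le \log_2(4/\eta)/n_1<2\epsilon$, not $\log_2(4/\eta)/n$. Your conclusion survives because $n_1$ is already large enough, but the denominator should be $n_1$, and the consistency premise you used is not available.
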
   

\begin{proof}
Recall that $\mathcal{F}_2$ is the set of  functions  $f \in \mathcal{F}_2$ such that $  Pr (G(S)=f) > \eta/4  $

\begin{align}
    \sum_{f\in \mathcal{F}_2} P_f &= 1 - Pr(Failure) - \sum_{f\in \mathcal{F}_1} P_f \\
    &\geq 1 -  e^{-k\eta^2/2} - \frac{2^{-\eta k /2}}{1- 2^{-\eta k /2}} \\
    &\geq 1-\delta \label{eq:delta}
\end{align}

This follows from Equations~\eqref{eq:failure} and~\eqref{eq:eqchernoff2}. Equation~\eqref{eq:delta} follows by choosing $k \geq 4\ln(1/\delta)/\eta$. We set $k = \max ( \frac{4\ln(\frac{1}{\delta})}{\eta},\frac{10}{\eta} ) $.

Algorithm $G$ takes as input a sample $S$ of size $n$ but will not use the whole sample. However, it will use at least $n_1$ input samples. From Lemma \ref{Lemma 2} with $n=2^{2^{d+2} + 1}4^{d+1} \lceil {\frac{2^{d+2}}{\epsilon}} \rceil$, $n_1=\lceil {\frac{2^{d+2}}{\epsilon}} \rceil $ and $\eta=\frac{2^{-({2^d+1}) }}{d+1}$, we have that the true loss of any hypothesis output by $G$ is upper bounded by $\epsilon$. 

The true loss of any function $f \in \mathcal{F}_2$ which is output by $A_G$ is bounded as
\begin{align}
    L_D(f) \leq \frac{\log_2(\frac{4}{\eta})}{n_1} < 2 \epsilon.
\end{align}
Thus algorithm $A_G$ PAC learns $\mathcal{H}$  with $2\epsilon$ true error, $(1-\delta)$ confidence and sample complexity $m(2  \epsilon,\delta) =  nk$. The total sample complexity follows since Algorithm $A_G$ runs Algorithm $G$ $k$ times.
\end{proof}

\noindent Now we prove Theorem \ref{theorem-two}. From Theorem \ref{thm:main} we have an upper bound for $I(S^k;A_G(S^k))$  for any distribution $D$ over a class ${\mathcal H}$ with Littlestone dimension $d$. Thus, by our definition of information complexity,
\begin{align}
   IC(\mathcal{H}) \leq 2^{3+\log_2(k)-\frac{\eta k}{2}-k\log_2(1-\frac{\eta}{2})}
   + \log_2(\frac{4}{\eta})  +\frac{3}{e\ln(2)}. \label{eq:corr1}
\end{align}
The above bound follows as $A_G$ PAC learns $\mathcal{H}$ from Lemma \ref{Lemma 3}. Notice that this bound is decreasing in $k$ as $\frac{\eta}{2}+\log_2(1-\frac{\eta}{2})>0$ for $0<\eta\leq1$. As $k \rightarrow \infty$, the first term in Equation~\eqref{eq:corr1} tends to $0$. Using the globally stable algorithm from Lemma \ref{Lemma 1}, we have that the upper bound for information complexity grows with $2^{d}$.
\begin{align}
    IC(\mathcal{H}) \leq 2^d+\log_2(d+1)+3+\frac{3}{e\ln(2)}.
\end{align}

\noindent This upper bound could be improved by a different globally stable algorithm. 
\\

\section{An improved Information Complexity Bound for $\mathcal{H}_{d,l}$ }\label{sec:Hdl}

The class of indicator functions of affine subspaces $\mathcal{H}_{d,l}$ has finite Littlestone dimension $d+1$, see Example~\ref{Example-one} above. For this class, we suggest a modification for  the globally stable algorithm proposed in \cite{ld->dp}, thus, we can improve the  information complexity bound from  roughly  $2^{d}$ to roughly $\log_2(d)$. 
\begin{prop}
 $ IC(\mathcal{H}_{d,l}) \leq \log_2(d+1)+2+\frac{3}{e\ln(2)} $
\end{prop}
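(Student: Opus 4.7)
The plan is to repeat the proof of Theorem~\ref{theorem-two} verbatim, but to feed Algorithm~\ref{alg:alg1} a sharper globally stable algorithm $G^\star$ whose stability parameter is $\eta \geq \frac{1}{d+1}$ rather than the doubly-exponential $\eta$ supplied by Lemma~\ref{Lemma 1}. Because the last two terms of Theorem~\ref{thm:main} evaluate to $\log_2(4/\eta)+\frac{3}{e\ln 2}$, the choice $\eta = \frac{1}{d+1}$ exactly produces the claimed upper bound.

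To build $G^\star$ I would modify the SOA-based algorithm of \cite{ld->dp} so that it exploits the rigid geometric structure of $\mathcal{H}_{d,l}$: every consistent hypothesis is the indicator of an affine subspace uniquely determined by at most $d+1$ affinely independent positively-labelled points. A concrete candidate is the deterministic affine-hull rule: let $P$ denote the set of positively-labelled instances in $S$, set $V = \mathrm{aff}(P)$, and output $\mathbf{1}_V$. If $D$ is realizable with target $V^*$ and $V_D = \mathrm{aff}\bigl(\mathrm{supp}(D\mid y=1)\bigr) \subseteq V^*$, then $\mathbf{1}_{V_D}$ is a zero-error hypothesis in $\mathcal{H}_{d,l}$. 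Since $\dim V_D \leq d$, only $\dim V_D + 1 \leq d+1$ affinely independent positives in $S$ are needed so that $V = V_D$; a routine concentration argument shows that this event has probability approaching $1$, hence in particular at least $\frac{1}{d+1}$, once $n$ is large enough relative to $\epsilon$ and $D$.

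Given such a $G^\star$, the rest of the proof is direct substitution into the argument of Theorem~\ref{theorem-two}: the analog of Lemma~\ref{Lemma 3} guarantees that Algorithm~\ref{alg:alg1} with base $G^\star$ PAC-learns $\mathcal{H}_{d,l}$, Theorem~\ref{thm:main} bounds the mutual information between input and output by
\begin{align}
2^{3+\log_2 k-\frac{\eta k}{2}-k\log_2(1-\frac{\eta}{2})} + \log_2\!\left(\frac{4}{\eta}\right)+\frac{3}{e\ln 2},
\end{align}
and driving $k$ large absorbs the first summand, leaving $\log_2(d+1)+2+\frac{3}{e\ln 2}$.

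The main obstacle is confirming the two properties of $G^\star$ simultaneously: the geometric stability $\eta \geq \frac{1}{d+1}$ and a true-error guarantee $<\epsilon$ compatible with the PAC framework of Lemma~\ref{Lemma 2}. Stability is the delicate part and hinges on the observation that the affine hull of $O(d)$ random positives stabilizes at $V_D$; the error guarantee then follows from Lemma~\ref{Lemma 2} applied to the distinguished hypothesis $\mathbf{1}_{V_D}$.
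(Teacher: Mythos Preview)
Your high-level plan is exactly the paper's: exhibit an $(n,\eta)$-globally stable learner for $\mathcal{H}_{d,l}$ with $\eta\approx 1/(d+1)$, feed it into Algorithm~\ref{alg:alg1}, and let $k\to\infty$ in the bound of Theorem~\ref{thm:main}. The closing arithmetic is fine.

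The gap is in your concrete candidate $G^\star$. The affine-hull rule is not $(n,1/(d+1))$-globally stable \emph{uniformly over realizable $D$} for any fixed $n$; your own parenthetical ``once $n$ is large enough relative to $\epsilon$ and $D$'' gives the game away, because the definition of global stability (and the whole machinery of Theorem~\ref{thm:main} and Lemma~\ref{Lemma 3}) requires a single $n$ that works for every $D$. A counterexample: let the target be a $d$-dimensional flat $V^*$, put mass $1-\alpha$ on a single negatively labelled point, and spread mass $\alpha$ continuously over $V^*$. Tune $\alpha$ so that the number of positives in a size-$n$ sample has mean about $d/2$. Then with probability close to $1$ the sample contains between $1$ and $d$ positives, whose affine hull is a random proper sub-flat of $V^*$ drawn from a non-atomic law; neither the zero function nor $\mathbf 1_{V^*}$ has probability anywhere near $1/(d+1)$, and no other hypothesis has positive probability at all. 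So the affine-hull rule---which is precisely the SOA for this class---cannot serve as the required $G^\star$, and the ``routine concentration argument'' you allude to cannot be made distribution-free.

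The paper obtains the improved $\eta$ differently. It keeps the iterative comparison scheme of \cite{ld->dp}: run the SOA on two independent samples $T_1,T_2$ to get $f_1,f_2$; if they disagree at some $x$, append $(x,\text{true label})$ to the sample on which the SOA erred, forcing one more mistake and reducing the effective Littlestone dimension. In the generic construction the true label of $x$ is unknown and is guessed by a fair coin, which is what accumulates the $2^{-2^d}$ factor in $\eta$. The geometric observation specific to $\mathcal{H}_{d,l}$ is that whenever $f_1(x)\neq f_2(x)$ the true label must be $1$: the target flat contains the positively labelled points of both $T_1$ and $T_2$, hence their affine hulls, hence $x$. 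The coin flips therefore disappear, and only the $1/(d{+}1)$ factor coming from the remaining randomness in the \cite{ld->dp} construction survives. Your affine-hull intuition is the right geometric fact, but it has to be injected into this iterative procedure rather than used as a one-shot learner.
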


\noindent {\it Proof Outline:} We  provide a proof sketch for the proposition. The rough  idea in~\cite{ld->dp} is to iteratively  compare the output of the $SOA$ (see Section~\ref{Sec-prelim-littlestone}), namely, $f_1=SOA(T_1)$ and $f_2=SOA(T_2)$ over increasingly larger independent samples $T_1$ and $T_2$. We can  consider two cases: 
\begin{enumerate}
\item $ f_1 = f_2$. We output  $f_1$ as this likely occurs only if $Pr(f_1=SOA(T_1))$ is large which corresponds to global stability. 
    \item $ f_1 \neq f_2$. Then, there exists $x$ such that $f_1(x)\neq f_2(x)$. This is why the $SOA$ will make an extra mistake over both samples $(T_1,(x,f_2(x)))$ or $(T_2,(x,f_1(x)))$. Since we don't know the true label of $x$, we randomly choose  $(T_1,(x,f_2(x)))$ or $(T_2,(x,f_1(x)))$.
\end{enumerate}

Without getting into more specific details, as long as $ f_1 \neq f_2$, the procedure above can be iterated many times to generate  a larger and larger sample for which, in the end, the $SOA$ will make $d$ mistakes. To reach such a sample we  make at most $2^d$ random label choices  (when we follow the proof in \cite{ld->dp}). Thus, if the choices we make are consistent with  the true hypotheses $h$, then  $h$ will be our output  with probability of approximately
\begin{align}\label{eq:eta1}
    \eta \approx \frac{1}{2^{2^{d}}} \cdot \frac{1}{d+1}, 
\end{align}

where the first factor corresponds to correct random choices and the second factor to other components of $G$. This means $G$ is $(n,\eta)$-globally stable (for large enough $n$)  and we  upper bound  the information complexity  with roughly $2^{2^d}$.

To have a better bound for the information complexity we point out  a key property  of $\mathcal{H}_{d,l}$.  The output of the $SOA$ for a realizable sample $S$ is the hyperplane defined by the positively labeled instances, i.e., the hyperplane of minimal dimension that passes through these instances.

Now,  when we consider item 2 above for $\mathcal{H}_{d,l}$, we note that on any instance $x$ such that $f_1(x)\neq f_2(x)$ the true label of $x$ is $1$.  This is true because the true hyperplane $h$ must agree with the positive labels of both $f_1$ and $f_2$ since it passes through the positively labeled instances of $T_1$ and $T_2$. This is why in item 2 we no longer need to randomly choose  between $(T_1,(x,f_2(x)))$ or $(T_2,(x,f_1(x)))$. We now output $h$ with probability of roughly $1/(d+1)$  since now the first factor, compared to Equation~\eqref{eq:eta1}, is irrelevant. So, we have that this modified version of $G$ is $(n,1/(d+1))$-globally stable so  we can upper bound  the information complexity with roughly $d$.

\newpage
\bibliographystyle{IEEEtran}
\bibliography{ldbib}
\end{document}